\newtheorem{theorem}{Theorem}
\newtheorem{lemma}{Lemma}
\providecommand{\algorithmtopspace}
\newcommand{\nbsp}{{\color{white}.}}
\newcommand{\RPISIM}{\texttt{RPIsim}\xspace}
\newcommand{\Moby}{\texttt{Moby}\xspace}
\newcommand{\ODE}{\texttt{ODE}\xspace}
\newcommand{\Bullet}{\texttt{Bullet}\xspace}
\newcommand{\LEMKE}{\texttt{LEMKE}\xspace}
\newcommand{\urlx}[1]{\texttt{\url{#1}}}
\begin{document}
%
% paper title
% can use linebreaks \\ within to get better formatting as desired
% Do not put math or special symbols in the title.
\title{Rapidly computable viscous friction and no-slip\\ rigid contact models}
%
%
% author names and IEEE memberships
% note positions of commas and nonbreaking spaces ( ~ ) LaTeX will not break
% a structure at a ~ so this keeps an author's name from being broken across
% two lines.
% use \thanks{} to gain access to the first footnote area
% a separate \thanks must be used for each paragraph as LaTeX2e's \thanks
% was not built to handle multiple paragraphs
%

\author{Evan Drumwright
\thanks{E. Drumwright is with the Department
of Computer Science, George Washington University, Washington,
DC \{drum@gwu.edu\}}% <-this % stops a space
\thanks{Manuscript received April 19, 2005; revised December 27, 2012.}}

\markboth{Journal of \LaTeX\ Class Files,~Vol.~11, No.~4, December~2012}%
{Shell \MakeLowercase{\textit{et al.}}: Bare Demo of IEEEtran.cls for Journals}
% The only time the second header will appear is for the odd numbered pages
% after the title page when using the twoside option.
% 
% *** Note that you probably will NOT want to include the author's ***
% *** name in the headers of peer review papers.                   ***
% You can use \IfCLASSOPTIONpeerreview for conditional compilation here if
% you desire.

% make the title area
\maketitle

% As a general rule, do not put math, special symbols or citations
% in the abstract or keywords.
\begin{abstract}
This article presents computationally efficient algorithms for modeling two special cases of rigid contact---contact with only viscous friction and contact without slip---that have particularly useful applications in robotic locomotion and grasping. Modeling rigid contact with Coulomb friction generally exhibits $O(n^3)$ expected time complexity in the number of contact points and $2^{O(n)}$ worst-case complexity. The special cases we consider exhibit $O(m^3 + m^2n)$ time complexity ($m$ is the number of independent coordinates in the multi rigid body system) in the expected case and polynomial complexity in the worst case; thus, asymptotic complexity is no longer driven by number of contact points (which is conceivably limitless) but instead is more dependent on the number of bodies in the system (which is often fixed). These special cases also require considerably fewer constrained nonlinear optimization variables thus yielding substantial improvements in running time. Finally, these special cases also afford one other advantage: the nonlinear optimization problems are numerically easier to solve.  
\end{abstract}

\IEEEpeerreviewmaketitle

\section{Introduction}
Dynamic robotic simulation; grasp planning; and, increasingly, locomotion planning and control employ rigid contact models with dry (typically Coulomb) and wet (viscous) friction. These contact models yield an effective tradeoff between computation speed and physical accuracy. While rigid contact models are far faster than, \emph{e.g.}, elastodynamic finite element analysis, they still require heavy computation: the expected time complexity for such models is $O(n^3)$ in the number of contact points. Additionally, the number of contact points input to the model is conceivably limitless. This issue is not just theoretically interesting: Wang~\cite{Wang:2013} reports that solving the contact problem absorbs up to 90\% of computation time when simulating a scenario for the DARPA Robotics Challenge using \ODE~\cite{Smith:nw}.

Roboticists are often content to use rigid contact models without
Coulomb friction for computational expediency. For example, one may wish to model locomotion or effect simulated grasping without observing slip; roboticists studying legged locomotion often predicate their models on no slip occurring, for example. If slip is desirable, purely viscous friction might yield a suitable model if, for example, a robot is walking on a wet surface. This article presents computationally efficient methods for both of these special cases.

These special cases provide the following computational and modeling advantages: \1 time complexity goes from worst-case exponential (the worst-case complexity of solving rigid contact problems with Coulomb friction~\cite{Stewart:1996,Anitescu:1997} using Lemke's Algorithm~\cite{Murty:1988}) to worst-case polynomial in the number of contact points; \2 significant reduction in the number of nonlinear optimization problem variables; and \3 a positive-semi-definite-matrix linear complementarity problem (LCP), in place of a copositive-plus LCP, which is demonstrably easier to solve~\cite{Drumwright:2012} (\emph{i.e.}, the solver is less likely to fail due to numerical errors) and permits the use of general algorithms for solving convex optimization problems.

Finally, we provide an algorithm that yields $O(m^3 + m^2n)$ expected asymptotic time complexity on these two contact models, where $m$ is the number of independent coordinates in the multi rigid body system. This algorithm therefore provides a means to make complexity more dependent on the number of independent coordinates in the system (this number remains constant except in the unusual case in which bodies are inserted into the simulation) than on the number of contact points (which is conceivably unlimited).

\section{LCPs, NCPs, and MLCPs}
\label{section:LCPs}
A LCP, or linear complementarity problem, ($\vect{r}, \mat{Q}$) signifies the problem:
\begin{align}
\vect{w} & = \mat{Q}\vect{z} + \vect{r} \nonumber \\
\vect{w} & \geq \vect{0} \nonumber \\
\vect{z} & \geq \vect{0} \nonumber \\
\tr{\vect{z}}\vect{w} & = 0 \nonumber
\end{align}
for unknown vectors $\vect{z}, \vect{w} \in \mathbb{R}^q$. 

A nonlinear complementarity problem (NCP) is composed of a number of nonlinear complementarity constraints~\cite{Cottle:1992} that take the form:
\begin{align}
\vect{x} & \geq \vect{0}\\
f(\vect{x}) & \geq \vect{0}\\
\tr{\vect{x}}f(\vect{x}) & = 0
\end{align}
where $\vect{x} \in \mathbb{R}^q$ and $f : \mathbb{R}^q \to \mathbb{R}^q$. 

A mixed linear complementarity problem (MLCP) is defined by the following constraints:
\begin{align}
\mat{A}\vect{x} + \mat{C}\vect{y} + \vect{g} & = \vect{0} \label{eqn:MLCP-begin} \\
\mat{D}\vect{x} + \mat{B}\vect{y} + \vect{h} & \geq \vect{0} \\
\vect{y} & \geq \vect{0} \\
\tr{\vect{y}}(\mat{C}\vect{x} + \mat{D}\vect{y} + \vect{h}) & = 0 \label{eqn:MLCP-end}
\end{align}
Note that the $\vect{x}$ variables are unconstrained, while the $\vect{y}$ variables must be non-negative. If $\mat{A}$ is non-singular, the unconstrained variables can be computed as:
\begin{align}
\vect{x} = -\inv{\mat{A}}(\mat{C}\vect{y} + \vect{g}) \label{eqn:MLCP-free}
\end{align}
Substituting $\vect{x}$ into equations \ref{eqn:MLCP-begin}--\ref{eqn:MLCP-end} yields the LCP ($\vect{e}, \mat{F}$):
\begin{align}
\mat{F} & \equiv \mat{B} - \mat{D}\inv{\mat{A}}\mat{C} \label{eqn:MLCP-LCP1} \\
\vect{e} & \equiv \vect{h} - \mat{D}\inv{\mat{A}}\vect{g} \label{eqn:MLCP-LCP2}
\end{align}
A solution $(\vect{y}, \vect{\nu})$ to this LCP obeys the relationship $\mat{F}\vect{y} + \vect{e} = \vect{\nu}$; once one has $\vect{y}$, $\vect{x}$ may be determined via Equation~\ref{eqn:MLCP-free}, and the MCLP is solved.

\section{Background}

\subsection{Coulomb friction}
Coulomb's friction model provides relationships between the force applied along the contact normal and the frictional forces. Coulomb friction considers two cases, \emph{rolling/sticking} and \emph{sliding}. The former occurs when the velocity is zero in the tangent plane of the contact frame; conversely, sliding occurs when that velocity is non-zero.

The magnitude of the friction force for a sliding contact modeled with Coulomb friction is given by the equation:
\begin{equation}
f_f = \mu_c f_n
\end{equation}
where $f_n$ is the magnitude of the force applied along the contact normal. The frictional force is applied directly opposite the direction of sliding (\emph{i.e.}, against the relative velocity in the tangent plane of the contact frame).

The magnitude of the friction force for a rolling or sticking contact modeled with Coulomb friction is given by the equation:
\begin{equation}
f_f \leq \mu_c f_n
\end{equation}
In the case of rolling/sticking friction, the friction force acts to resist motion in the tangent (\emph{e.g.}, in the case of a box resting on a slope); thus, $f_f$ may be strictly less than $\mu_c f_n$. If external forces become sufficiently large to overcome rolling/sticking friction forces, the rolling/sticking contact will transition to sliding.

Many applications in robotics use the Coulomb friction model because it is
relatively straightforward to compute---one can determine the frictional forces without integrating ordinary differential equations---and it is reasonably predictive.
Nevertheless, Coulomb friction is somewhat expensive (computationally) to model: the rigid contact models of Stewart and Trinkle~\cite{Stewart:1996} and Anitescu and Potra~\cite{Anitescu:1997} can be solved in expected polynomial time in the $n$ contacts\footnote{These models yield an order $n$ copositive-plus LCP solvable by Lemke's Algorithm~\cite{Lemke:1965}. Each iteration of Lemke's Algorithm requires an $O(n^2)$ matrix factorization update, and $n$ iterations of the algorithm are expected~\cite{Cottle:1992}.}, though exponential complexity may be exhibited in the worst case.

\subsection{Acceleration-level rigid body contact model with Coulomb and viscous friction}
\label{section:accel-level}
We now describe the rigid contact model with Coulomb and viscous friction that uses only non-impulsive forces for consistency with the \emph{principle of constraints}~\cite{Kilmister:1966}. The multi rigid body dynamics equation with contact and joint constraint forces is given below: 
\begin{align}
\mat{M}(t)\dot{\vect{v}} = & \vect{f}(t) + \tr{\mat{J}(t)}\vect{f}_j + \tr{\mat{N}(t)}\vect{f}_n + \ldots \label{eqn:accel-system-dynamics} \\
& \tr{\mat{S}_k(t)}\vect{f}_s + \tr{\mat{T}_k(t)}\vect{f}_t - \tr{\mat{Q}(t)}\vect{f}_q - \ldots \label{eqn:accel-Coulomb} \\
& \tr{\mat{S}(t)}\mat{\mu}_v \mat{S}(t)\vect{v} -  \tr{\mat{T}(t)}\mat{\mu}_v \mat{T}(t)\vect{v} \label{eqn:accel-viscous}
\end{align}
where $\mat{M} \in \mathbb{R}^{m \times m}$ is the system inertia matrix; $\vect{v} \in \mathbb{R}^{m}$ is the system velocity; $\mat{J} \in \mathbb{R}^{j \times m}$ is the matrix of $j$ bilateral constraint equations; $\mat{N} \in \mathbb{R}^{n \times m}$, $\mat{S} \in \mathbb{R}^{n \times m}$, and $\mat{T} \in \mathbb{R}^{n \times m}$ are matrices of $n$ wrenches applied along the contact normal, first contact tangent, and second contact tangent, respectively; $\mat{Q} \in \mathbb{R}^{r \times m}$ is a matrix of $r$ generalized wrenches applied against the direction of sliding for $r \leq n$ sliding contacts; $\vect{f}_j \in \mathbb{R}^j$ is the vector bilateral constraint force magnitudes; $\vect{f}_n \in \mathbb{R}^n$ is a vector of contact normal force magnitudes; $\vect{f}_s \in \mathbb{R}^k$ and $\vect{f}_t \in \mathbb{R}^k$ are vectors of contact tangent force magnitudes applied at the $k = n - r$ rolling/sticking contacts; $\vect{f}_q \in \mathbb{R}^r$ is a vector of contact tangent force magnitudes applied at the $r$ sliding contacts; $\vect{f}$ is a vector of forces on the rigid body system (gravity, Coriolis and centrifugal forces, \emph{etc.}); and $\mat{\mu}_v$ is a diagonal matrix of viscous friction coefficients.

Equation~\ref{eqn:accel-Coulomb} specifies the Coulomb friction forces and Equation~\ref{eqn:accel-viscous} specifies the viscous friction forces. The viscous model, where friction forces oppose the direction of motion, is commonly used in robotics (see, \emph{e.g.},~\cite{Sciavicco:2000am}) and is a simplification of the viscous drag term in fluid dynamics. 
Out of the $n$ points of contact in the system, some may be rolling/sticking and the remainder will be sliding. For Coulomb friction, the first two terms of Equation~\ref{eqn:accel-Coulomb} ($\tr{\mat{S}_k(t)}\vect{f}_s + \tr{\mat{T}_k(t)}\vect{f}_t$) are relevant to the rolling/sticking contacts only ($k$ specifies the indices of $\mat{S}$ and $\mat{T}$ that correspond to rolling/sticking contacts) and the last term ($-\tr{\mat{Q}(t)}\vect{f}_q$) is relevant to only sliding contacts.

\subsubsection{Bilateral constraint equation}
\label{section:accel-bilat}
Bilateral constraints can be specified in the form $\phi(\vect{q}) = \vect{0}$, where $\vect{q}$ are the generalized coordinates of the system (joint constraints that are an explicit function of time are not considered here, though their inclusion would not change the results in this article). Such constraints can be differentiated once with respect to time to yield:
\begin{equation}
\mat{J}\dot{\vect{v}} = \vect{0}
\end{equation}
where $\mat{J} \equiv \frac{\partial \phi}{\partial \vect{q}}$. If we differentiate the constraints with respect to time once more, the bilateral joint constraints can be enforced using the equation:
\begin{equation}
\mat{J}\dot{\vect{v}} + \dot{\mat{J}}\vect{v} = \vect{0} \label{eqn:bilat}
\end{equation}
where $\dot{\mat{J}} \equiv \frac{\partial}{\partial \vect{q}}{\mat{J}\vect{v}}$.

\subsubsection{Contact normal constraints}
\label{section:accel-contact-normal}
We assume that there are $n$ points of contact. The $i^{\textrm{th}}$ contact must satisfy the following linear complementarity condition that relates normal force and non-interpenetration:
\begin{equation}
0 \leq f_{n_i} \perp \tr{\vect{n}_i}\dot{\vect{v}} + \tr{\dot{\vect{n}_i}}\vect{v} \geq 0
\label{eqn:normal-comp-accel}
\end{equation}
where $\vect{n}_i$ and $\dot{\vect{n}}_i$ are column vectors taken from the $i^{\textrm{th}}$ rows of $\mat{N}$ and $\dot{\mat{N}}$, respectively. 
Here we adopt the notation $a \perp b$ to denote the relationship $a\cdot b = 0$.
$f_{n_i} \geq 0$ requires that the contact force can only push bodies apart, \mbox{$\tr{\vect{n}_i}\dot{\vect{v}} + \dot{\tr{\vect{n}_i}}\vect{v} \geq 0$} requires that the bodies cannot be accelerating toward one another at the $i^{th}$ contact point after the contact forces are applied, and the $f_{n_i} \cdot \  (\tr{\vect{n}_i}\dot{\vect{v}} + \tr{\dot{\vect{n}_i}}\vect{v}) = 0$ constraint ensures that frictionless contact does no work. 

\subsubsection{Sliding friction}
If the velocity in the tangent plane at the $i^{\textrm{th}}$ contact point is non-zero, then the contact is sliding, and the Coulomb friction model specifies the magnitude of force to be applied. 
\begin{equation}
|f_{q_i}| = \mu |f_{n_i}|
\end{equation} 

\subsubsection{Rolling/sticking friction}
If the velocity at time $t$ in the tangent plane at the $i^{\textrm{th}}$ contact point is zero, then the contact is rolling/sticking at time $t$ and may either continue rolling/sticking or begin sliding, depending on the normal force and Coulomb friction coefficient. The nonlinear complementarity conditions are then expressed by the following equations (adapted from \cite{Trinkle:1997}):
\begin{align}
0 & \leq u^2f_{n_i}^2 - f_{s_i}^2 - f_{t_i}^2\ \bot\ \sqrt{\dot{v}_{s_i}^2 + \dot{v}_{t_i}^2} \geq 0 \label{eqn:coulomb-accel} \\
0 & = \mu f_{n_i} \dot{v}_{s_i} + f_{s_i}\sqrt{\dot{v}_{s_i}^2 + \dot{v}_{t_i}^2} \label{eqn:fs-accel} \\
0 & = \mu f_{n_i} \dot{v}_{t_i} + f_{t_i}\sqrt{\dot{v}_{s_i}^2 + \dot{v}_{t_i}^2} \label{eqn:ft-accel}
\end{align}
Let us now examine the constraints above.  Equation~\ref{eqn:coulomb-accel} constrains the frictional force to lie within the friction cone; if the tangential acceleration is non-zero, then the frictional force must lie on the edge of the friction cone. Equations~\ref{eqn:fs-accel}~and~\ref{eqn:ft-accel} ensure that the frictional force opposes the tangent acceleration.

\subsubsection{Solvability of the model}
Others (\emph{e.g.}, \cite{Baraff:1994}) have already shown that this rigid model may not possess a solution if there are any sliding contacts; such contact scenarios are known as \emph{inconsistent configurations}. Nevertheless, we present this model because the contact model with Coulomb friction, which we present next and use to motivate the move to a velocity-level contact model, will build off of it.

\subsection{Solvable rigid body contact model with Coulomb and viscous friction}
The contact model of Stewart and Trinkle~\cite{Stewart:1996} and Anitescu and Potra~\cite{Anitescu:1997} provides a guaranteed solution to the problem of inconsistent configurations in contact models with Coulomb friction. This model is presented to show a  velocity-level formulation, which allows the model to overcome the issue of inconsistent configurations. The no-slip model introduced in Section~\ref{section:no-slip} will also employ a velocity-level formulation to simulate contact without sliding in arbitrary configurations; Lynch and Mason showed that sliding with infinite friction is possible for the acceleration-level model described in the previous section~\cite{Lynch:1995}.

We now describe this contact model---we consider only the aspect of the model that treats all contacts as inelastic impacts and do not consider extensions to collisional impacts with restitution. For simplicity of presentation, we do not linearize the friction cone, which yields a NCP rather than the LCP in~\cite{Stewart:1996,Anitescu:1997}. 

The contact model uses a first-order approximation to the rigid body dynamics to resolve issues like Painlev\'{e}'s Paradox (and other inconsistent contact configurations~\cite{Baraff:1994}), for which no non-impulsive force solutions exist. The rigid body dynamics are given by:
\begin{align}
\mat{M}(t)\Delta \vect{v} = & \Delta t \vect{f}(t) + \tr{\mat{J}(t)}\vect{f}_j + \ldots \label{eqn:vel-system-dynamics} \\
& \tr{\mat{N}(t)}\vect{f}_n + \tr{\mat{S}(t)}\vect{f}_s + \tr{\mat{T}(t)}\vect{f}_t \nonumber
\end{align}
where $\mat{M}$, $\vect{v}$, $\mat{J}$, $\mat{N}$, $\mat{S}$, $\mat{T}$, $\vect{f}_j$, $\vect{f}_n$, $\vect{f}_s$, $\vect{f}_t$, and $\vect{f}$ are as defined in Section~\ref{section:accel-level} and $\Delta t$ is the change in time that realizes the first-order approximation. We now define $\vect{v}^* \equiv \vect{v} + \Delta \vect{v}$.

\subsubsection{Bilateral constraint equation}
Because the bilateral joint constraints are now defined at the velocity level, the constraints are enforced using the equation:
\begin{equation}
\mat{J}\vect{v}^* = \vect{0} \label{eqn:vel-bilateral-constraint}
\end{equation}

\subsubsection{Contact normal constraints}
The velocity-level constraints on contact normal force and non-interpenetration are now defined as:
\begin{equation}
\label{eqn:vel-normal-comp}
0 \leq f_{n_i} \perp \tr{\vect{n}}_i\vect{v}^* \geq 0 
\end{equation}

\subsubsection{Coulomb friction constraints}
Coulomb friction is effected more simply in this model than in the acceleration-level model: contacts can be treated identically whether they are initially sliding or sticking. The nonlinear complementarity conditions for the $i^{\textrm{th}}$ contact are:
\begin{align}
0 & \leq u^2f_{n_i}^2 - f_{s_i}^2 - f_{t_i}^2\ \bot\ \sqrt{v_{s_i}^{*^2} + v_{t_i}^{^*2}} \geq 0 \\
0 & = \mu f_{n_i} v_{s_i}^* + f_{s_i}\sqrt{v_{s_i}^{*^2} + v_{t_i}^{*^2}} \\
0 & = \mu f_{n_i} v_{t_i}^* + f_{t_i}\sqrt{v_{s_i}^{*^2} + v_{t_i}^{*^2}}
\end{align}
These equations are analogous to Equations~\ref{eqn:coulomb-accel}--\ref{eqn:ft-accel}.

If the nonlinear complementarity conditions are converted to linear complementarity constraints by use of a linearized friction cone (\emph{i.e.}, a friction polygon), then a provably solvable copositive-plus LCP~\cite{Cottle:1992} results. However, Lemke's Algorithm~\cite{Lemke:1965} is currently the only algorithm provably capable of solving copositive-plus LCPs. Lemke's Algorithm can exhibit exponential complexity~\cite{Murty:1988}, though polynomial time is expected.

\section{Contact model with purely viscous friction}
\label{section:viscous-model}
We now describe the contact model with purely viscous friction. We start from the multi rigid body dynamics equation at the acceleration level (Equations~\ref{eqn:accel-system-dynamics}~and~\ref{eqn:accel-viscous}), which are reproduced below:
\begin{align}
\mat{M}(t)\dot{\vect{v}} = & \vect{f}(t) + \tr{\mat{J}(t)}\vect{f}_j + \tr{\mat{N}(t)}\vect{f}_n + \ldots \nonumber \\
& \tr{\mat{S}(t)}\mat{\mu}_v \mat{S}(t)\vect{v} -  \tr{\mat{T}(t)}\mat{\mu}_v \mat{T}(t)\vect{v} \nonumber
\end{align}

To this we add bilateral constraints (Equation~\ref{section:accel-bilat}) and the normal contact compressive force and non-interpenetration and complementarity constraints (Equation~\ref{section:accel-contact-normal}), again reproduced below:
\begin{align*}
\mat{J}\dot{\vect{v}} + \dot{\mat{J}}\vect{v} & = \vect{0} \\
0 \leq \tr{\vect{n}_i}\dot{\vect{v}} + \tr{\dot{\vect{n}_i}}\vect{v} & \perp f_{n_i} \geq 0 \ \textrm{ for } i = 1,\ldots, n
\end{align*}

Combining these equations yields the following MLCP:
\begin{align}
\begin{bmatrix}\mat{M} & -\tr{\mat{J}} & -\tr{\mat{N}} \\
\mat{J} & \vect{0} & \vect{0} \\
\mat{N} & \vect{0} & \mat{0}
\end{bmatrix}
\begin{bmatrix}
\dot{\vect{v}} \\
\vect{f}_j \\
\vect{f}_n
\end{bmatrix}
+
\begin{bmatrix}
\vect{f}^* \\
\dot{\mat{J}}\vect{v} \\
\dot{\mat{N}}\vect{v}
\end{bmatrix}
& = 
\begin{bmatrix}
\vect{0} \\
\vect{0} \\
\vect{\gamma} 
\end{bmatrix}\\
\vect{f}_n & \geq \vect{0} \\
\vect{\gamma} & \geq \vect{0} \\
\tr{\vect{f}}_n\vect{\gamma} & = 0
\end{align}

where $\vect{f}^* \equiv -\vect{f} + \tr{\mat{S}}\mat{\mu}_v\mat{S}\vect{v} + \tr{\mat{T}}\mat{\mu}_v\mat{T}\vect{v}$ and $\vect{\gamma} \equiv \mat{N}\dot{\vect{v}} + \dot{\mat{N}}\vect{v}$. As long as $\mat{J}$ has full row rank (we will describe how to ensure this condition in the next section), the mixed LCP can be converted to a conventional LCP (as described in Section~\ref{section:LCPs}) using the following definitions:
\begin{align}
\mat{A} & \equiv \begin{bmatrix}
\mat{M} & -\tr{\mat{J}} \\
\mat{J}  & \mat{0}
\end{bmatrix}\\
\mat{C} & \equiv \begin{bmatrix}
-\tr{\mat{N}} \\
\mat{0}
\end{bmatrix}\\
\mat{D} & \equiv -\tr{\mat{C}} \\
\mat{B} & \equiv \mat{0}\\
\vect{x} & \equiv \begin{bmatrix} \dot{\vect{v}} \\ \vect{f}_j  \end{bmatrix}\\
\vect{y} & \equiv \vect{f}_n\\
\vect{g} & \equiv \begin{bmatrix} -\vect{f}^* \\ \vect{0} \end{bmatrix}\\
\vect{h} & \equiv \vect{0}
\end{align} 
Equations~\ref{eqn:MLCP-LCP1}~and~\ref{eqn:MLCP-LCP2} then yield the following standard LCP:
\begin{align}
\mat{F} & \equiv \mat{N}\inv{\mat{A}}\tr{\mat{N}} \\
\vect{e} & \equiv \mat{N}\inv{\mat{A}}\vect{f}^*
\end{align}
The system inertia matrix is block diagonal (each block is invertible), so $\mat{A}$ is invertible if $\mat{J}$ has full row rank (if it is not---indicating that one or more constraints is redundant---a subset of $\mat{J}$ which has full row rank can be used to ensure that $\mat{A}$ is invertible). From~\cite{Bhatia:2007}, a matrix of $\mat{F}$'s form must be non-negative definite, \emph{i.e.}, either positive semi-definite (PSD) or positive definite (PD). Additionally, Baraff provided an algorithm that provably solved LCPs of the form $(\mat{G}\vect{r}, \mat{G}\mat{H}\tr{\mat{G}})$, where $\mat{H} \in \mathbb{R}^{m \times m}$ is a symmetric matrix, $\vect{r} \in \mathbb{R}^m$, and $\mat{G} \in \mathbb{R}^{n \times m}$~\cite{Baraff:1994}. Finally, we note that LCPs with PSD/PD matrices are equivalent to convex quadratic programs~\cite{Cottle:1992}, which means that solving the LCP exhibits worst-case polynomial computational complexity.

\section{Reducing expected time complexity \\from $O(n^3)$ to $O(m^3 + m^2n)$}
A system with $m$ degrees-of-freedom requires no more than $m$ positive force magnitudes applied along the contact normals to satisfy the constraints for the contact models with purely viscous friction and without slip (the latter model will be presented in Section~\ref{section:no-slip}). We now prove this statement.

Assume we permute and partition the rows of $\mat{N}$ into $r$ linearly independent and $n-r$ linearly dependent rows, denoted by indices $I$ and $D$, respectively, as follows:
\begin{equation}
\mat{N} = \begin{bmatrix}\mat{N}_I \\ \mat{N}_D \end{bmatrix}
\end{equation}
Then the LCP vectors $\vect{q} = \mat{N}\vect{v}$, $\vect{z} \in \mathbb{R}^n$, and $\vect{w} \in \mathbb{R}^n$ and LCP matrix $\mat{Q}=\mat{N}\inv{\mat{A}}\tr{\mat{N}}$ can be partitioned as follows:
\begin{equation}
\begin{bmatrix} \mat{Q}_{II} & \mat{Q}_{ID} \\ \mat{Q}_{DI} & \mat{Q}_{DD} \end{bmatrix}\begin{bmatrix}\vect{z}_I \\ \vect{z}_D \end{bmatrix} + \begin{bmatrix}\vect{q}_I \\ \vect{q}_D \end{bmatrix} = \begin{bmatrix}\vect{w}_I \\ \vect{w}_D \end{bmatrix}
\end{equation}
Given some matrix $\alpha \in \mathbb{R}^{(n-r) \times r}$, it is the case that $\mat{N}_{D} = \mat{\alpha}\mat{N}_{I}$, and therefore that $\mat{Q}_{DI} =  \mat{\alpha}\mat{N}_I\inv{\mat{A}}\tr{\mat{N}_{I}}$, $\mat{Q}_{ID} =  \mat{N}_I\inv{\mat{A}}\tr{\mat{N}_{I}}\tr{\mat{\alpha}}$ (by symmetry), $\mat{Q}_{DD} = \mat{\alpha}\mat{N}_I\inv{\mat{A}}\tr{\mat{N}_{I}}\tr{\mat{\alpha}}$, and $\vect{q}_D = \mat{\alpha}\mat{N}_I\vect{v}$.

\begin{lemma}
\label{lemma:rank}
Since $\rank{\mat{X}\mat{Y}} \leq \min\left(\rank{\mat{X}},\rank{\mat{Y}}\right)$, the number of positive components of $\vect{z}_I$ can not be greater than rank$(\mat{A})$.
\end{lemma}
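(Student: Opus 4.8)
The plan is to combine a standard structural property of solutions of non-negative-definite LCPs with the rank inequality quoted in the statement. Recall from the previous section that $\mat{Q} = \mat{N}\inv{\mat{A}}\tr{\mat{N}}$ is non-negative definite and that the associated LCP is solvable (by Baraff's algorithm, or via the equivalence with a convex quadratic program); since it has the form $\mat{G}\mat{H}\tr{\mat{G}}$ with $\mat{H}$ symmetric, $\mat{Q}$ is in fact symmetric, and hence its principal block $\mat{Q}_{II} = \mat{N}_I\inv{\mat{A}}\tr{\mat{N}_I}$ is symmetric positive semi-definite too.

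The first step is to produce a solution $(\vect{w},\vect{z})$ for which the columns of $\mat{Q}$ indexed by $\{i : z_i > 0\}$ are linearly independent. Starting from any solution, if those columns are dependent there is a nonzero $\vect{d}$, supported on the positive components of $\vect{z}$, lying in the null space of the corresponding principal submatrix; writing $\mat{Q} = \mat{R}\tr{\mat{R}}$ and using $\tr{\vect{d}}\mat{Q}\vect{d} = \|\tr{\mat{R}}\vect{d}\|^2 = 0$ shows that $\vect{d}$, extended by zeros, lies in the null space of all of $\mat{Q}$. Then $\vect{z} + \lambda\vect{d}$ leaves $\vect{w} = \mat{Q}\vect{z} + \vect{q}$ unchanged and preserves complementarity (as $\vect{d}$ is supported where $\vect{w} = \vect{0}$), and a suitable $\lambda$ drives one positive component of $\vect{z}$ to zero while the rest stay non-negative; iterating terminates at a solution with the stated independence property, which therefore has at most $\rank{\mat{Q}}$ positive components.

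Applying this to the partitioned problem completes the argument. At such a solution the positive components of $\vect{z}$, hence in particular those of $\vect{z}_I$, index linearly independent columns of $\mat{Q}$; since $\mat{Q}_{DI} = \mat{\alpha}\mat{Q}_{II}$, adjoining the rows of $\mat{Q}_{DI}$ to those of $\mat{Q}_{II}$ does not raise the rank, so the columns of $\mat{Q}$ carried by indices in $I$ span a space of dimension $\rank{\mat{Q}_{II}}$. Thus the number of positive components of $\vect{z}_I$ is at most $\rank{\mat{Q}_{II}} = \rank{\mat{N}_I\inv{\mat{A}}\tr{\mat{N}_I}}$, and the quoted inequality gives $\rank{\mat{N}_I\inv{\mat{A}}\tr{\mat{N}_I}} \le \rank{\mat{N}_I\inv{\mat{A}}} \le \rank{\inv{\mat{A}}} = \rank{\mat{A}}$, as claimed. (Equivalently, one may bound directly: $\rank{\mat{Q}} = \rank{\mat{N}\inv{\mat{A}}\tr{\mat{N}}} \le \rank{\mat{A}}$.)

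I expect the only real obstacle to be the first step. The perturbation along $\vect{d}$ must preserve \emph{every} LCP condition simultaneously, and this works precisely because $\mat{Q}$ (and its block $\mat{Q}_{II}$) is symmetric and positive semi-definite: a null direction of the active sub-block is then a null direction of the whole matrix, leaving $\vect{w}$ untouched. If one is content to cite the existence of a complementary basic solution for non-negative-definite LCPs, this step is immediate and the argument collapses to the single application of the product-rank inequality that the lemma's phrasing ("Since $\rank{\mat{X}\mat{Y}} \le \dots$") foregrounds.
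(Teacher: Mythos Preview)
Your argument is correct, but it takes a substantially different---and much heavier---route than the paper's.

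The paper's ``proof'' consists solely of establishing the elementary inequality $\rank{\mat{X}\mat{Y}} \leq \min(\rank{\mat{X}},\rank{\mat{Y}})$; it never explicitly connects this to the number of positive components of $\vect{z}_I$. The intended (implicit) connection is a pure dimension count: by construction $\vect{z}_I$ has exactly $r$ components, where $r$ is the number of linearly independent rows of $\mat{N}$. Since $\mat{N}$ has $m$ columns, $r = \rank{\mat{N}} \leq m$, and applying the product-rank inequality to $\mat{Q} = \mat{N}\inv{\mat{A}}\tr{\mat{N}}$ (or just counting dimensions) gives $r \leq \rank{\mat{A}}$. Hence \emph{every} solution has at most $r \leq \rank{\mat{A}}$ positive components in $\vect{z}_I$, simply because $\vect{z}_I$ lives in $\mathbb{R}^r$. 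No LCP theory is invoked.

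You instead prove the genuinely nontrivial fact that a symmetric PSD LCP admits a solution whose total support has cardinality at most $\rank{\mat{Q}} \leq \rank{\mat{A}}$, via the standard null-space perturbation (basic-feasible-solution) argument. This is correct and strictly stronger: it bounds the support of the full vector $\vect{z}$ (including $\vect{z}_D$), and it singles out a particular solution rather than relying on the dimension of the $I$-block. The detour through $\mat{Q}_{II}$ and $\mat{\alpha}$ in your third paragraph is unnecessary, as you note parenthetically; the direct bound $\rank{\mat{Q}} \leq \rank{\mat{A}}$ suffices. The trade-off is that the paper's version is a one-line triviality given the setup, while yours does real work---work that is closer in spirit to what is actually needed for the algorithm in the following subsection.
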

\begin{proof}
Since the columns of $\mat{X}\mat{Y}$ have $\mat{X}$ multiplied by each column of $\mat{Y}$, \emph{i.e.}, $\mat{X}\mat{Y} = \begin{bmatrix} \mat{X}\vect{y}_1 & \mat{X}\vect{y}_2 & \ldots & \mat{X}\vect{y}_n \end{bmatrix}$. Columns in $\mat{Y}$ that are linearly dependent will thus produce columns in $\mat{X}\mat{Y}$ that are linearly dependent (with precisely the same coefficients). Thus, rank($\mat{X}\mat{Y}$) $\leq$ rank($\mat{Y}$). Applying the same argument to the transposes produces\\ rank($\mat{X}\mat{Y}$) $\leq$ rank($\mat{X}$), thereby proving the claim.
\end{proof}
We now show that---in the case that the number of positive components of $\vect{z}_I$ is equal to the rank of $\mat{A}$---no more positive force magnitudes are necessary to solve the LCP.

\begin{theorem}
\label{thm:maxcard}
If $(\vect{z}_I = \vect{a}, \vect{w}_I = \vect{0})$ is a solution to the LCP $(\vect{q}_I, \mat{Q}_{II})$, then $(\begin{bmatrix}\tr{\vect{z}_I} = \tr{\vect{a}} & \tr{\vect{z}_D} = \tr{\vect{0}}\end{bmatrix}^{\mathsf{T}}, \vect{w} = \vect{0})$ is a solution to the LCP $(\vect{q}, \mat{Q})$.
\end{theorem}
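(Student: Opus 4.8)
The plan is to verify directly that the proposed pair meets all four defining conditions of the LCP $(\vect{q}, \mat{Q})$, reusing the block identities derived just before the statement. Write $\vect{z}_I = \vect{a}$, $\vect{z}_D = \vect{0}$, and $\vect{w} = \vect{0}$, and then evaluate $\mat{Q}\vect{z} + \vect{q}$ one block of rows at a time.

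For the rows indexed by $I$: since $\vect{z}_D = \vect{0}$, the term $\mat{Q}_{ID}\vect{z}_D$ drops out, so this block of $\mat{Q}\vect{z} + \vect{q}$ reduces to $\mat{Q}_{II}\vect{a} + \vect{q}_I$, which equals $\vect{w}_I = \vect{0}$ precisely because $(\vect{z}_I = \vect{a}, \vect{w}_I = \vect{0})$ solves $(\vect{q}_I, \mat{Q}_{II})$. For the rows indexed by $D$: the term $\mat{Q}_{DD}\vect{z}_D$ also drops out, and substituting the identities $\mat{Q}_{DI} = \mat{\alpha}\mat{N}_I\inv{\mat{A}}\tr{\mat{N}_I} = \mat{\alpha}\mat{Q}_{II}$ and $\vect{q}_D = \mat{\alpha}\mat{N}_I\vect{v} = \mat{\alpha}\vect{q}_I$ gives $\mat{Q}_{DI}\vect{a} + \vect{q}_D = \mat{\alpha}(\mat{Q}_{II}\vect{a} + \vect{q}_I) = \mat{\alpha}\vect{0} = \vect{0} = \vect{w}_D$. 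Hence $\mat{Q}\vect{z} + \vect{q} = \vect{0} = \vect{w}$, which is the linear part of the LCP.

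The other three conditions are immediate. We have $\vect{w} = \vect{0} \geq \vect{0}$; we have $\vect{z} \geq \vect{0}$ because $\vect{z}_I = \vect{a} \geq \vect{0}$ (part of $\vect{a}$ being a solution of $(\vect{q}_I, \mat{Q}_{II})$) while $\vect{z}_D = \vect{0} \geq \vect{0}$; and complementarity $\tr{\vect{z}}\vect{w} = 0$ holds trivially since $\vect{w} = \vect{0}$. Therefore $(\vect{z}, \vect{w})$ solves $(\vect{q}, \mat{Q})$, as claimed.

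I do not anticipate a genuine obstacle: the entire content is the observation that the linear dependence among the rows of $\mat{N}$, captured by the single matrix $\mat{\alpha}$, propagates unchanged to the rows of $\mat{Q}$ and to the entries of $\vect{q}$, so the $D$-block of the LCP is nothing more than the $\mat{\alpha}$-image of the $I$-block and is satisfied automatically once the $I$-block is. The only care required is the bookkeeping with the partitioned matrices and remembering that a sub-LCP solution is itself coordinatewise nonnegative. Together with Lemma~\ref{lemma:rank}, this yields the statement motivating the section: one can always realize the contact constraints with at most rank$(\mat{A})$ --- hence at most $m$ --- positive normal force magnitudes.
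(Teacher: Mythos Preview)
Your proof is correct and follows essentially the same approach as the paper's own proof: both verify the partitioned LCP conditions directly, using the hypothesis to handle the $I$-block and the factorization through $\mat{\alpha}$ to show the $D$-block equation $\mat{Q}_{DI}\vect{a} + \vect{q}_D = \mat{\alpha}(\mat{Q}_{II}\vect{a} + \vect{q}_I) = \mat{\alpha}\vect{0}$ holds automatically. The only cosmetic difference is that the paper enumerates six conditions (three per block) rather than your four, but the substance is identical.
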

\begin{proof}
For $(\begin{bmatrix}\tr{\vect{z}_I} = \tr{\vect{a}} & \tr{\vect{z}_D} = \tr{\vect{0}}\end{bmatrix}^{\mathsf{T}}, \vect{w} = \vect{0})$ to be a solution to the LCP $(\vect{q}, \mat{Q})$, six conditions must be satisfied:
\begin{enumerate}
\item $\vect{z}_I \geq \vect{0}$
\item $\vect{w}_I \geq \vect{0}$
\item $\tr{\vect{z}_I}\vect{w}_I = 0$
\item $\vect{z}_D \geq \vect{0}$
\item $\vect{w}_D \geq \vect{0}$
\item $\tr{\vect{z}_D}\vect{w}_D = 0$
\end{enumerate}
Of these, \1\,, \4, and \6 are met trivially by the assumptions of the theorem. Since $\vect{z}_D = \vect{0}$, $\mat{Q}_{II}\vect{z}_I + \mat{Q}_{ID}\vect{z}_D + \vect{q}_I = \vect{0}$, and thus $\vect{w}_I = \vect{0}$, thus satisfying \2 and \3. Also due to $\vect{z}_D = \vect{0}$, it suffices to show for \5 that $\mat{Q}_{DI}\vect{z}_I + \vect{q}_D \geq \vect{0}$. From above, the left hand side of this equation is equivalent to $\mat{\alpha}(\mat{N}_I\inv{\mat{A}}\tr{\mat{N}_I}\vect{a} + \mat{N}_I\vect{v})$, or $\mat{\alpha} \vect{w}_I$, which itself is equivalent to $\mat{\alpha} \vect{0}$. Thus, $\vect{w}_D = \vect{0}$.
\end{proof}

\subsubsection{Algorithm}
We use the theorem above to make a minor modification to the Principal Pivot Method I~\cite{Cottle:1968,Murty:1988} (PPM), which solves LCPs with $P$-matrices (complex square matrices with fully non-negative principal minors~\cite{Murty:1988} that includes positive semi-definite matrices as a proper subset). The resulting algorithm limits the size of matrix solves and multiplications.

The PPM uses a set $\beta$ with maximum cardinality $n$ for a LCP of order $n$. Of a pair of LCP variables, $(z_i, w_i)$, exactly one will be in $\beta$; we say that the other belongs to $\overline{\beta}$. If a variable belongs to $\beta$, we say that the variable is a \emph{basic variable}; otherwise, it is a \emph{non-basic variable}. Using this set, partition the LCP matrices and vectors as shown below:
\begin{equation}
\begin{bmatrix}
\vect{w}_{\beta} \\
\vect{w}_{\overline{\beta}}
\end{bmatrix} =
\begin{bmatrix}
\mat{A}_{\beta \beta} & \mat{A}_{\beta \overline{\beta}} \\ 
\mat{A}_{\overline{\beta} \beta} & \mat{A}_{\overline{\beta} \overline{\beta}} 
\end{bmatrix}
\begin{bmatrix}
\vect{z}_{\beta}\\
\vect{z}_{\overline{\beta}} 
\end{bmatrix}
+
\begin{bmatrix}
\vect{q}_{\beta}\\
\vect{q}_{\overline{\beta}}
\end{bmatrix} \nonumber
\end{equation}
Segregating the basic and non-basic variables on different sides yields:
\begin{align}
\begin{bmatrix}
\vect{w}_{\beta} \\
\vect{z}_{\beta}
\end{bmatrix} = &
\begin{bmatrix}
\mat{A}_{\beta \overline{\beta}} - \mat{A}_{\beta \beta}\inv{\mat{A}_{\overline{\beta} \beta}} & \mat{A}_{\beta \beta}\inv{\mat{A}_{\overline{\beta} \beta}} \\ 
-\inv{\mat{A}_{\overline{\beta} \beta}}\mat{A}_{\overline{\beta} \overline{\beta}} & \inv{\mat{A}_{\overline{\beta} \beta}}
\end{bmatrix}
\begin{bmatrix}
\vect{z}_{\overline{\beta}} \\
\vect{w}_{\overline{\beta}} 
\end{bmatrix}
+ \ldots \nonumber \\
& \quad
\begin{bmatrix}
\vect{q}_{\beta} - \mat{A}_{\beta \beta}\inv{\mat{A}_{\overline{\beta} \beta}} \vect{q}_{\overline{\beta}} \\
-\inv{\mat{A}_{\overline{\beta} \beta}}\vect{q}_{\overline{\beta}}
\end{bmatrix} \nonumber
\end{align}
If we set the values of the basic variables to zero, then solving for the values of the non-basic variables $\vect{z}_{\overline{\mathcal{B}}}$ and $\vect{w}_{\overline{\mathcal{B}}}$ entails only block inversion of $\mat{A}$.

The unmodified PPM~I operates in the following manner: \1 Find an index $i$ of a basic variable $x_i$ (where $x_i$ is either $w_i$ or $z_i$, depending which of the two is basic) such that $x_i < 0$; \2 swap the variables between basic and non-basic sets for index $i$ (\emph{e.g.}, if $w_i$ is basic and $z_i$ is non-basic, make $w_i$ non-basic and $z_i$ basic); \3 determine new values of $\vect{z}$ and $\vect{w}$; \4 repeat \1\hspace{.75mm}--\hspace{.1mm}\3 until no basic variable has a negative value. 

PPM~I requires few modifications, which are provided in Algorithm~\ref{alg:PPM}. First, the full matrix $\mat{N} \cdot \inv{\mat{M}} \cdot \tr{\mat{N}}$ is never constructed (such construction would require $O(n^3)$ time). Instead, Line~\ref{line:PPM:A} of the algorithm constructs a maximum $m \times m$ system; thus, that operation requires only $O(m^3)$ operations. Similarly, Lines 13--14 also leverage Theorem~\ref{thm:maxcard} in order to compute $\vect{w}^\dag$ and $\vect{a}^\dag$ efficiently (though these operations do not affect the asymptotic time complexity).
Assuming that the number of iterations for a pivoting algorithm is $O(n)$ in the size of the input,\footnote{Regardless of the pathological problem devised by Klee and Minty\cite{Klee:1972}, experience with the Simplex Algorithm on thousands of \emph{practical} problems shows that it requires fewer than $3n$ iterations and the expected time complexity for the Simplex Algorithm is polynomial~\cite{Shamir:1987,Spielman:2004}. We are unaware of research that shows these results are also applicable to pivoting methods for LCPs, though Cottle~\emph{et al.} claim $O(n)$ expected iterations~\cite{Cottle:1992}.} and that each iteration requires at most two pivot operations (each rank-1 update operation to a matrix factorization will exhibit time complexity $O(m^2)$), the asymptotic complexity of the modified PPM~I algorithm is $O(m^3 + m^2n)$. The termination conditions for the algorithm are not affected by our modifications.

\begin{algorithm*}
\begin{algorithmic}[1]
\caption{\label{alg:PPM}$\{ \vect{z}, \vect{w}\} = $\textsc{ LCP}$(\vect{N}, \vect{M}, \vect{f}^*)$ Solves a frictionless contact model using a modification of the Principal Pivoting Method~I Algorithm.}
\State $n \leftarrow $ rows($\mat{N}$)
\State $\vect{q} \leftarrow \mat{N}\cdot\vect{f}^*$
\State $i \leftarrow \argmin_i q_i$
\Comment {Check for trivial solution}
\If{$q_i \geq 0$}
\State \textbf{return}\ $\{ \vect{0}, \vect{q} \}$
\EndIf
\State $\overline{\mathcal{B}} \leftarrow \{ i \}$
\Comment {Establish initial nonbasic indices}
\State $\mathcal{B} \leftarrow \{ 1, \ldots, i-1, i+1, \ldots, n\}$
\Comment {Establish initial basic indices}
\While{\emph{true}}
\State $\mat{A} \leftarrow \mat{N}_{\overline{\mathcal{B}}} \cdot \inv{\mat{M}} \cdot \tr{\mat{N}_{\overline{\mathcal{B}}}}$ \label{line:PPM:A}
\State $\vect{b} \leftarrow \mat{N}_{\overline{\mathcal{B}}} \cdot \vect{f}^*$
\State $\vect{z}^\dag \leftarrow \inv{\mat{A}} \cdot -\vect{b}$ \;\nbsp
\Comment {Solve for non-basic components of $\vect{z}$}
\State $\vect{a}^\dag \leftarrow \inv{\mat{M}} \cdot \tr{\mat{N}_{\overline{\mathcal{B}}}} \vect{z}^\dag + \vect{f}^*$
\State $\vect{w}^\dag \leftarrow \mat{N} \cdot \vect{a}^\dag$
\State $i \leftarrow \argmin_i w^\dag_i$
\Comment {Find the index for moving into the non-basic set (if any)}
\If{$w^\dag_i \geq 0$} 
  \State $j \leftarrow \argmin_i \vect{z}^\dag_i$
  \Comment {No index to move into the non-basic set; look whether there is an index to\\  \nbsp\qquad\qquad\qquad\qquad\qquad\qquad\qquad\qquad\qquad\; move into the basic set}
  \If{$z^\dag_j < 0$}
    \State $k \leftarrow \overline{\mathcal{B}}(j)$
    \State $\mathcal{B} \leftarrow \mathcal{B} \cup \{ k \}$
    \Comment {Move index $k$ into the basic set}
    \State $\overline{\mathcal{B}} \leftarrow \overline{\mathcal{B}} - \{ k \}$
    \State \textbf{continue}
  \Else
    \State $\vect{z} \leftarrow \vect{0}$
    \State $\vect{z}_{\overline{\mathcal{B}}} \leftarrow \vect{z}^\dag$
    \State $\vect{w} \leftarrow \vect{0}$
    \State $\vect{w}_{\mathcal{B}} \leftarrow \vect{w}^\dag$
    \State \textbf{return} $\{ \vect{z}, \vect{w} \}$
  \EndIf
\Else
  \State $\overline{\mathcal{B}} \leftarrow \overline{\mathcal{B}} \cup \{ i \}$
  \Comment{Move index $i$ into the non-basic set}
  \State $\mathcal{B} \leftarrow \mathcal{B} - \{ i \}$
  \State $j \leftarrow \argmin_i \vect{z}^\dag_i$
  \Comment {Look whether there is an index to move into the basic set}
  \If{$z^\dag_j < 0$}
    \State $k \leftarrow \overline{\mathcal{B}}(j)$
    \State $\mathcal{B} \leftarrow \mathcal{B} \cup \{ k \}$
    \Comment {Move index $k$ into the basic set}
    \State $\overline{\mathcal{B}} \leftarrow \overline{\mathcal{B}} - \{ k \}$
  \EndIf
\EndIf
\EndWhile
\end{algorithmic}
\end{algorithm*}

\section{No-slip contact model}
\label{section:no-slip}
A contact model without slip requires a velocity-level contact model in accordance with Lynch and Mason's finding that sliding can occur with infinite friction at the acceleration level~\cite{Lynch:1995}. The no-slip friction contact model uses the first-order approximation and builds on Equations~\ref{eqn:vel-system-dynamics}, \ref{eqn:vel-bilateral-constraint}, and \ref{eqn:vel-normal-comp} by dictating that the tangential velocity at each contact must be zero at $\vect{v}(t+\Delta t)$:
\begin{align}
\mat{S}\vect{v}(t+\Delta t) & = \vect{0} \label{eqn:Sv0} \\
\mat{T}\vect{v}(t+\Delta t) & = \vect{0} \label{eqn:Tv0}
\end{align}

Forming these five equations into a MLCP and using the variable definitions in Section~\ref{section:LCPs} yields:
\begin{align}
\mat{A} & \equiv \begin{bmatrix}
\mat{M} & -\tr{\mat{J}} & -\tr{\mat{S}} & -\tr{\mat{T}}  \\
\mat{J}  & \mat{0} & \mat{0} & \mat{0} \\
\mat{S} & \mat{0} & \mat{0} & \mat{0}  \\
\mat{T} & \mat{0} & \mat{0} & \mat{0}
\end{bmatrix}\\
\mat{C} & \equiv \begin{bmatrix}
-\tr{\mat{N}} \\
\mat{0} \\
\mat{0} \\
\mat{0} \\
\end{bmatrix}\\
\mat{D} & \equiv -\tr{\mat{C}} \\
\mat{B} & \equiv \mat{0}\\
\end{align}
\begin{align}
\vect{x} & \equiv \begin{bmatrix} \vect{v}(t+\Delta t) \\ \vect{f}_j \\ \vect{0} \\ \vect{0} \end{bmatrix}\\
\vect{y} & \equiv \vect{f}_n\\
\vect{g} & \equiv \begin{bmatrix} -\mat{M}\vect{v}(t) \\ \vect{0} \\ \vect{0} \\ \vect{0} \end{bmatrix}\\
\vect{h} & \equiv \vect{0}
\end{align} 
The matrix $\mat{A}$ may be singular, which would prevent us from converting the MLCP to a standard LCP. However, if $\mat{J}$, $\mat{S}$, and $\mat{T}$ have full row rank \emph{or we identify the largest row blocks of those matrices such that full row rank is attained}, $\mat{A}$ is invertible without affecting the solution to the MLCP. Algorithm~\ref{alg:find-indices} performs exactly this task.

\begin{algorithm}
\caption{\textsc{Find-Indices}($\mat{M}, \mat{J}, \mat{S}, \mat{T})$, determines the row indices ($\mathcal{J}$, $\mathcal{S}$, and $\mathcal{T}$) of $\mat{J}$, $\mat{S}$, and $\mat{T}$ such that the equality matrix $\mat{A}$ (Equation~\ref{eqn:MLCP-begin}) is non-singular. \label{alg:find-indices}}
\begin{algorithmic}[1]
\State $\mathcal{J} \leftarrow \emptyset$
\State $\mathcal{S} \leftarrow \emptyset$
\State $\mathcal{T} \leftarrow \emptyset$
\For {$i = 1, \ldots r$}
\Comment $r$ is the number of bilateral constraint equations
\State $\mathcal{J}^* \leftarrow \mathcal{J} \cup \{ i \}$
\State Set $\mat{X} \leftarrow \tr{\mat{J}}_{\mathcal{J}^*}$
\If {$\tr{\mat{X}}\inv{\mat{M}}\mat{X}$ not singular} \label{line:XTiMX:1}
\State $\mathcal{J} \leftarrow \mathcal{J}^*$
\EndIf
\EndFor
\For {$i = 1, \ldots, n$}
\Comment $n$ is the number of contacts
\State $\mathcal{S}^* \leftarrow \mathcal{S} \cup \{ i \}$
\State Set $\mat{X} \leftarrow \begin{bmatrix} \tr{\mat{J}}_\mathcal{J} & \tr{\mat{S}}_{\mathcal{S}^*} & \tr{\mat{T}}_\mathcal{T}\end{bmatrix}$
\If {$\tr{\mat{X}}\inv{\mat{M}}\mat{X}$ not singular} \label{line:XTiMX:2}
\State $\mathcal{S} \leftarrow \mathcal{S}^*$
\EndIf
\State $\mathcal{T}^* \leftarrow \mathcal{T} \cup \{ i \}$
\State Set $\mat{X} \leftarrow \begin{bmatrix} \tr{\mat{J}}_\mathcal{J} & \tr{\mat{S}}_\mathcal{S} & \tr{\mat{T}}_{\mathcal{T}^*} \end{bmatrix}$
\If {$\tr{\mat{X}}\inv{\mat{M}}\mat{X}$ not singular} \label{line:XTiMX:3}
\State $\mathcal{T} \leftarrow \mathcal{T}^*$
\EndIf
\EndFor
\State \Return $\{ \mathcal{J}, \mathcal{S}, \mathcal{T} \}$
\end{algorithmic}
\end{algorithm}

The singularity check on Lines~\ref{line:XTiMX:1},~\ref{line:XTiMX:2}, and~\ref{line:XTiMX:3} of Algorithm~\ref{alg:find-indices} is best performed using a Cholesky factorization; if the factorization is successful, the matrix is non-singular. Given that $\mat{M}$ is non-singular (it is symmetric and positive definite), the maximum size of $\mat{X}$ in Algorithm~\ref{alg:find-indices} is $m \times m$; if $\mat{X}$ were larger, it would be singular (see Lemma~\ref{lemma:rank}). 

Given this information, the time complexity of Algorithm~\ref{alg:find-indices} is dominated by Lines~\ref{line:XTiMX:1},~\ref{line:XTiMX:2}, and~\ref{line:XTiMX:3}. As $\mat{X}$ changes by at most one row and one column per Cholesky factorization, singularity can be checked by  $O(m^2)$ updates to an initial $O(m^3)$ Cholesky factorization. The overall time complexity is \mbox{$O(m^3 + nm^2)$}.

\subsection{Resulting systems}
Using Equations~\ref{eqn:MLCP-LCP1}~and~\ref{eqn:MLCP-LCP2}), the LCP matrix $\mat{F}$ and vector $\vect{e}$ are equivalent to:
\begin{align}
\mat{F} & \equiv \mat{N}\inv{\mat{X}}\tr{\mat{N}} \\
\vect{e} & \equiv \mat{N}\inv{\mat{X}}\mat{M}\vect{v}(t)
\end{align}

As in Section~\ref{section:viscous-model}, 
$\mat{F}$ must be symmetric and positive-semi-definite and---as noted in Section~\ref{section:viscous-model}---Baraff's algorithm~\cite{Baraff:1994} guarantees that a solution to this LCP exists.

The $\mat{S}\vect{v}(t+\Delta t) = \mat{T}\vect{v}(t+\Delta t) = \vect{0}$ constraints (Equations~\ref{eqn:Sv0}~and~\ref{eqn:Tv0}) and solvability of the LCP contrast with the finding of Lynch and Mason~\cite{Lynch:1995}, who showed that sliding with infinite friction is possible. The admittance of impulsive forces has resolved this ``paradox'' analogously to the manner in which contact models like~\cite{Stewart:1996,Anitescu:1997} resolved Painlev\'{e}'s Paradox~\cite{Painleve:1895} and other inconsistent contact configurations~\cite{Stewart:2000a}.

\section{Experiments}
We tested the contact models using two common contact scenarios in robotics, grasping and locomotion, in order to assess speed and numerical stability. These experiments can be reproduced using the experimental setup described at \urlx{https://github.com/PositronicsLab/no-slip-and-viscous-experiments}. 

\subsection{Grasping experiment}
\label{section:exp:grasping}
We used \RPISIM (\urlx{https://code.google.com/p/rpi-matlab-simulator}) to simulate a force-closure grasping scenario (depicted in Figure~\ref{fig:grasp}) on a Macbook Air with 1.8 GHz Intel Core i5 CPU.  Twelve contact points were generated between each pair of boxes\footnote{The equal size boxes contacting in the manner in Figure~\ref{fig:grasp} yields degenerate contact normals at the box corners; the RPI simulator treats this problem by duplicating each contact with all three possible directions for the contact normal.}, yielding 36 contact points total. For the contact model with Coulomb friction (the Stewart-Trinkle model~\cite{Stewart:2000}), a friction ``pyramid'' (four sided approximation to the friction cone) was used, yielding six LCP variables per contact (\emph{i.e.}, 216 variables total). The RPI simulator allowed us to substitute the Stewart-Trinkle model with the no-slip friction model readily, which resulted in only 36 LCP variables. The simulation was run using a step size of $0.01$ for ten iterations ($\frac{1}{10}$ of one second of simulated time); the grasped objects would tend to fall from the gripper after ten iterations \emph{only when using Stewart-Trinkle} (due to numerical issues with Lemke's Algorithm, to be discussed below). Lemke's Algorithm was implemented using \LEMKE~\cite{LEMKE}.

The $\mat{P}_0$ matrix resulting from the no-slip friction model allowed us to employ the modified PPM solver and \texttt{MATLAB}'s \texttt{quadprog} solver (with the active-set algorithm) to solve the LCP. We used Lemke's Algorithm~\cite{Lemke:1965}, employing Tikhonov regularization~\cite{Cottle:1992} as necessary, to solve the Stewart-Trinkle model.  No low-rank updates were used in our implementation of Algorithm~\ref{alg:find-indices}.

\begin{table}[htdp]
\begin{center}
\begin{tabular}{|l|r|}
\hline
Contact model & Running time (mean $\pm\ \sigma$) \\ \hline
Stewart-Trinkle (Lemke's Algorithm) & 10.9681s $\pm$ 2.1812s  \\
\quad $\mu_c =100.0, \mu_v=0.0$ &\\
No-slip (active-set QP solver) & 1.9892s $\pm$ 0.2640s \\
No-slip (modified PPM) & 1.6680s $\pm$ 0.3669s \\ \hline
\end{tabular}
\end{center}
\label{table:experiment}
\caption{Mean running times for the grasping experiment. Ten trials were run for each method. Timings include all aspects of the simulation (including collision detection).}
\end{table}%

\begin{figure}[htbp]
\begin{center}
\includegraphics[width=.8\linewidth]{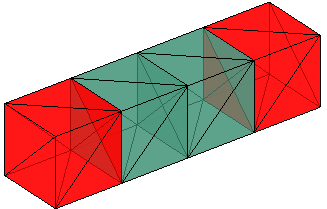}
\caption{A depiction of the grasping experiment described in Section~\ref{section:exp:grasping}. The two red boxes act as grippers and push inward. Gravity pushes downward. Given sufficient friction ($\mu = \infty$), the grippers should ideally keep the blue boxes grasped using force closure.}
\label{fig:grasp}
\end{center}
\end{figure}

This experiment yielded several findings. As expected, reducing the LCP variables by a factor of five (216 variables to 36 variables) results in much faster solutions ($451$--$558\%$ faster mean, depending on the solver). Fewer variables also results in less rounding error; the no-slip approach was able to model the grasping scenario reliably for at least 100 iterations (again, compared to around ten iterations for Stewart-Trinkle).  Of the twelve contacts per pair of boxes, it was only necessary to apply forces to two contacts, which the modified PPM method was able to exploit: it ran nearly 20\% faster than the \texttt{quadprog} algorithm (mean and maximum numbers of pivot operations were observed to be 5.5 and 7, respectively). This performance differential is considerable given that our modified PPM algorithm was not implemented as a MEX file and that our implementation does not use low-rank updates to maximize performance).   

\subsection{Locomotion experiments}
\label{section:exp:locomotion}
We used the \Moby simulator (\urlx{https://github.com/PositronicsLab/Moby}) to simulate a quadrupedal robot walking on a terrain map (see Figure~\ref{fig:locomotion}) over ten seconds. An event-driven method (see~\cite{Brogliato:2002} for a description of this paradigm) is used to simulate the system instead of the time-stepping approaches used in \ODE, \Bullet, and \RPISIM; popular implementations of this approach are susceptible to energy gain when correcting interpenetration~\cite{Stewart:2000}, which destabilizes our robot in the process. 

The integration method used for the no-slip model experiment is symplectic Euler (St\"ormer-Verlet) with a step size of 0.001, while fourth-order Runge-Kutta integration was used for the viscous model experiment with identical step size. Our approach using the former integrator allows only the no-slip model to be activated, while our approach using the latter integrator permits the acceleration-level viscous model to be used for sustained contacts. When impacts occur (\emph{e.g.}, on initial foot/ground contact) in the latter approach, the simulation uses an inelastic impact model~\cite{Drumwright:2010} with purely viscous friction. 

Locomotion experiments were run on an Intel Xeon 2.27GHz desktop computer. All aspects of the simulation, including forward dynamics computations, collision detection, and controls (which performs dynamics calculations) were accounted for in results; single LCP solves generally run too quickly to obtain timings for just that operation. We point out that the simulations run considerably slower than similar systems modeled using, \emph{e.g.}, \ODE, but the goals of the two simulators. \ODE uses approximate solves and permits interpenetration. \Moby aims to provide a \emph{verifiable} simulator, \emph{i.e.}, one that adheres closely to the rigid body dynamics models (which means that interpenetration is impermissible). 

% results
Results for the no-slip experiment are provided in Table~\ref{table:locomotion-noslip}, which shows a speedup of nearly 28\%. The minimum and maximum number of contact points generated in the experiment is 1 and 30, respectively; the mean number of contact points is 6, and the standard deviation is 8. Thus, simulations with greater numbers of contacts could expect greater performance differentials.  

Results for the viscous experiment are provided in Table~\ref{table:locomotion-viscous}, which shows a speedup of over 37\%. We note that the viscous friction experiments required
significantly longer to run than the no-slip experiments. We hypothesize that this disparity is due to the behavior of the simulation when applying the viscous model, which tends to produce rapid movements upon contact. Those rapid movements, which appear due to some sensitivity  in the underlying ordinary differential equations, slow the simulator's continuous collision detection system (see~\cite{Mirtich:1996vt} for a description of that system). 

\begin{figure}[htbp]
\centering
\includegraphics[width=\linewidth]{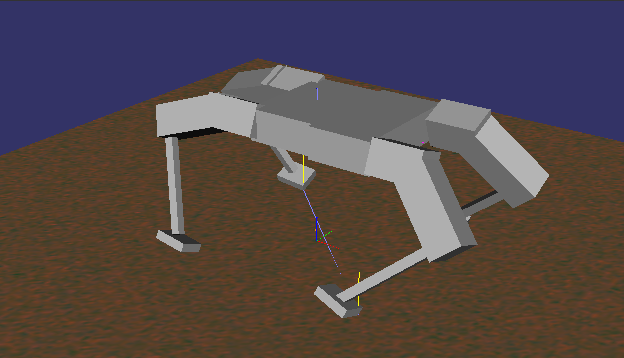}
\caption{A depiction of the quadrupedal robot walking on a terrain map in the locomotion experiment, as described in Section~\ref{section:exp:locomotion}. The no-slip and viscous friction models were both assessed.}
\label{fig:locomotion}
\end{figure}

\begin{table}[htdp]
\centering
\begin{tabular}{|l|r|}
\hline
Contact model & Running time \\ \hline
Drumwright-Shell & 416.284s \\
\quad $\mu_c=100.0, \mu_v = 0.0$ & \\
No-slip (modified PPM) & 301.796s \\
\hline
\end{tabular}
\label{table:locomotion-noslip}
\caption{Times required to simulate the quadruped locomotion scenario described in Section~\ref{section:exp:locomotion} under a no-slip contact model. Timings include all aspects of the simulation (including collision detection).}
\end{table}%

\begin{table}[htdp]
\centering
\begin{tabular}{|l|r|}
\hline
Contact model & Running time \\ \hline
Viscous (Lemke's Algorithm) & 2936.36s \\
\quad $\mu_c=0.0, \mu_v = 0.1$ &  \\
Viscous (modified PPM) & 2139.73s \\
\hline
\end{tabular}
\label{table:locomotion-viscous}
\caption{Times required to simulate the quadruped locomotion scenario described in Section~\ref{section:exp:locomotion} under a purely viscous friction model. Timings include all aspects of the simulation (including collision detection).}
\end{table}%

\section{Conclusion}
We presented an algorithm for rapidly computing two rigid contact models without Coulomb friction that have proven useful in certain modeling and simulation applications for robotics. We showed how these models exhibit both asymptotic computational complexity and significant running time advantages over rigid models with Coulomb friction. While we do not expect these special-case models to replace rigid models with Coulomb friction, the former serve as computationally efficient alternatives as applications allow.

\section{Acknowledgements}
We thank Sam Zapolsky for providing the quadruped model and the locomotion controller. This work was funded by NSF CMMI-110532.

\bibliographystyle{abbrv}
\bibliography{paper}

\end{document}